\newtheorem{theorem}{Theorem}
\newtheorem{definition}{Definition}
\newtheorem{lemma}{Lemma}
\newtheorem{proposition}{Proposition}
\begin{document}

\title{\textbf{STRIDE: Subset-Free Functional Decomposition for XAI in Tabular Settings}}
\author{
  Chaeyun Ko\thanks{Most of this work was completed during the author's M.S. at Ewha Womans University.}\\
  Independent Researcher, Seoul, South Korea \\
  \texttt{chaeyuniris@gmail.com} \\
  \texttt{\href{https://www.linkedin.com/in/chaeyunko/}{linkedin.com/in/chaeyunko}}
}
\date{}
\maketitle

% ===================================================================
% ABSTRACT
% ===================================================================
\begin{abstract}
Most explainable AI (XAI) frameworks are limited in their expressiveness, summarizing complex feature effects as single scalar values ($\phi_i$). This approach answers \emph{what} features are important but fails to reveal \emph{how} they interact. Furthermore, methods that attempt to capture interactions, like those based on Shapley values, often face an exponential computational cost. We present STRIDE, a scalable framework that addresses both limitations by reframing explanation as a subset-enumeration-free, orthogonal \textbf{functional decomposition} in a Reproducing Kernel Hilbert Space (RKHS). In the tabular setups we study, STRIDE analytically computes functional components $f_S(x_S)$ via a recursive kernel-centering procedure. The approach is model-agnostic and theoretically grounded with results on orthogonality and $L^2$ convergence. In tabular benchmarks (10 datasets, median over 10 seeds), STRIDE attains a 3.0$\times$ median speedup over TreeSHAP and a mean $R^2=0.93$ for reconstruction. We also introduce \textbf{component surgery}, a diagnostic that isolates a learned interaction and quantifies its contribution; on California Housing, removing a single interaction reduces test $R^2$ by $0.023\pm0.004$.
\end{abstract}

\vspace{1em}
\noindent\textbf{Keywords:} Explainable AI (XAI), Functional Decomposition, Feature Interaction, Kernel Methods, Scalable Explainability, Reproducing Kernel Hilbert Space (RKHS)
\vspace{1em}

% ===================================================================
% INTRODUCTION
% ===================================================================

\section{Introduction}
\label{sec:introduction}

As machine learning models grow more complex, a core challenge in explainable artificial intelligence (XAI) is moving beyond asking \emph{what} features are important to understanding \emph{how} they interact to drive decisions. While many methods can identify influential features, they are often limited in expressiveness: they distill the rich, non-linear effect of a feature into a \textbf{single scalar value} ($\phi_i$). This reductionist approach obscures the complex, non-additive relationships—such as synergy and redundancy—that frequently govern the behavior of high-performance models.

Frameworks grounded in Shapley values~\cite{Lundberg2017SHAP}, for instance, provide an axiomatically fair way to allocate credit but still primarily deliver scalar attributions. Furthermore, they face the exponential complexity ($2^d$) of evaluating all feature subsets, forcing practitioners to rely on slow approximations. Other approaches, such as model-specific methods like TreeSHAP~\cite{Lundberg2019TreeSHAP}, neural network-specific methods like Integrated Gradients~\cite{Sundararajan2017IG} or DeepLIFT~\cite{Shrikumar2017DeepLIFT}, or local surrogates like LIME~\cite{Ribeiro2016LIME}, also ultimately focus on scalar importance scores, leaving the functional nature of feature effects largely unexamined. Prior attempts at functional decomposition, meanwhile, have been largely restricted to global-only analysis~\cite{DaVeiga2021arxiv} or specific model classes~\cite{Lengerich2020Purify}, limiting their general applicability. This leaves a critical gap: a need for a practical, model-agnostic framework that can efficiently uncover the functional structure of model predictions at an instance level.

We introduce \textbf{STRIDE} (Subset-free Kernel-Based Decomposition for Explainable AI), a framework that directly addresses these limitations. STRIDE reframes model explanation as a \textbf{model-agnostic, instance-specific functional decomposition} \emph{in the tabular setups we study}. Using a recursive centering of product kernels in an RKHS, it computes orthogonal contribution functions $f_S(x_S)$ for feature subsets $S$ via an analytical projection scheme, without explicitly enumerating the $2^d$ subsets. This differs from other recent efforts that achieve polynomial-time computation for specific model classes, such as product-kernel methods~\cite{Mohammadi2025PKeX}. This allows STRIDE to tackle both the expressiveness and computational challenges simultaneously.

The key contributions of this paper are:
\begin{itemize}
    \item We introduce a paradigm for XAI that recovers orthogonal functional components, revealing \textbf{how features interact through quantifiable synergy and redundancy}, a richer view than scalar attributions alone.
    \item We develop a computationally efficient, \textbf{subset-enumeration-free} procedure based on recursively centered kernels that computes the decomposition via analytical projections.
    \item We provide a \textbf{rigorous theoretical foundation} for the decomposition, with results for orthogonality, uniqueness, and $L^2$ convergence under stated assumptions.
    \item We empirically validate STRIDE on 10 public tabular datasets, showing that it achieves competitive speed (median speedup of $\approx$3.0$\times$ over TreeSHAP) while maintaining high fidelity ($R^2 > 0.93$ on average) in our setup.
    \item We introduce \textbf{`component surgery,'} a method to isolate and measure the direct performance impact of a single learned interaction, confirming its functional necessity.
\end{itemize}

Ultimately, STRIDE offers a principled framework for analyzing learned functions through their constituent parts, helping to bridge the gap between local attribution and a global, mechanistic understanding of AI models.

\vspace{1em}

% ===================================================================
% SECTION 2: RELATED WORK 
% ===================================================================
\section{Related Work}
\label{sec:related-work}

\noindent
STRIDE builds upon established ideas in functional analysis and game theory, synthesizing them into a novel framework for interpretability. We first review the foundational concepts, then discuss the limitations of dominant scalar attribution methods and prior functional approaches, clarifying the gap that motivates our formulation.

% --- Subsection 2.1: Foundations ---
\subsection{Foundations: Functional ANOVA and RKHS}
The decomposition of a function into orthogonal components dates back to Hoeffding’s ANOVA decomposition~\cite{Hoeffding1948}. For a square-integrable function $f : \mathcal{X} \to \mathbb{R}$, one can expand $f$ into a sum of zero-mean subfunctions $f_S$ indexed by subsets $S \subseteq \{1, \ldots, d\}$. This separates the main effects (single features) from higher-order interactions with respect to a reference measure.

Reproducing Kernel Hilbert Spaces (RKHS) provide a powerful structure for this analysis. An RKHS is a Hilbert space of functions associated with a positive-definite kernel $K$ that satisfies the reproducing property $\langle f, K(\cdot, x) \rangle_{\mathcal{H}_K} = f(x)$~\cite{Ghojogh2021RKHS}. When $K$ is a universal kernel, the corresponding RKHS $\mathcal{H}_K$ can be dense in $L^2(\mathcal{X})$ under suitable conditions~\cite{Steinwart2001,Micchelli2006Universality}, making it a versatile setting for projection-based analyses of arbitrary functions.

% --- Subsection 2.2: The Dominance and Limitations of Scalar Attributions ---
\subsection{Shapley Values and the Limitations of Scalar Attributions}
The dominant paradigm for feature attribution is based on the Shapley value~\cite{Lundberg2017SHAP}, which offers an axiomatically fair method for credit distribution. However, this line of work faces two major hurdles. The first is computational: the exact calculation requires iterating through an exponential ($2^d$) number of feature subsets, making it intractable for all but the smallest problems and necessitating approximations like KernelSHAP. The second is conceptual: the end product is a single scalar attribution ($\phi_i$) per feature. This scalar value, while useful, collapses the entirety of a feature's complex, non-linear influence into one number, failing to reveal the functional form of its effects or its interactions with other features. Even highly optimized variants like RKHS-SHAP~\cite{Chau2022RKHSSHAP}, which improve estimation for kernel methods, still focus on delivering scalar attributions.

% --- Subsection 2.3: Prior Work in Functional Decomposition ---
\subsection{Functional Decomposition in XAI}
Several works have adapted functional ANOVA concepts for interpretability, aiming to overcome the limitations of scalar values. For instance, Durrande et al.~\cite{Durrande2013} developed ANOVA kernels within an RKHS, but their formulation relies on marginal integration and has primarily been applied to surrogate models like Gaussian Processes. Kernel-based global sensitivity methods (e.g., kernel Sobol indices~\cite{DaVeiga2021arxiv}) provide informative population-level summaries but are not designed for the instance-level explanations crucial for local model debugging. Closer to our work, Lengerich et al.~\cite{Lengerich2020Purify} proposed an efficient algorithm to recover functional components, but their approach is model-specific, tailored to tree ensembles. More recently, Hiabu et al.~\cite{Hiabu2023Functional} also explored functional decomposition to unify local and global explanations. These prior efforts highlight a clear gap: the need for a model-agnostic, instance-specific, and computationally efficient method for functional decomposition. 

% --- Subsection 2.4: The STRIDE Approach ---
\subsection{The STRIDE Approach: A Synthesis}
STRIDE is designed to fill this gap by providing a novel synthesis. It retains the instance-level focus of Shapley-based methods and the rich, structural view of functional ANOVA, but overcomes their respective limitations. \textbf{Relative to prior functional methods}, STRIDE's recursive kernel-centering procedure (i) avoids explicit marginal integration, (ii) operates post-hoc, making it model-agnostic in the tabular settings we study, and (iii) produces instance-level decompositions $f = \sum_S f_S(x_S)$ that bridge local and global views. \textbf{Complementary to scalar methods}, STRIDE recovers the full functional components $f_S(x_S)$ first, which can then be aggregated into Shapley-compatible scalars if desired (Prop.~\ref{prop:shapley}). This provides a structured view that coexists with, but is strictly more expressive than, traditional scalar attributions. To our knowledge, this combination of subset-enumeration-free computation with post-hoc, instance-level functional analysis in an RKHS setting remains underexplored.

\vspace{1em}

% ===================================================================
% SECTION 3: THEORETICAL FRAMEWORK 
% ===================================================================
\section{Theoretical Framework}
\label{sec:theory}

\noindent
We formalize the theoretical basis of STRIDE, framing model explanation as an orthogonal functional decomposition within an $L^2(\mu)$ space. Our core contribution is a constructive method that leverages a family of recursively \emph{centered kernels} to build mutually orthogonal subspaces corresponding to feature interactions of different orders. This allows us to analytically project a model's function $f$ onto these subspaces to recover its functional components $\{f_S\}$, bypassing the combinatorial enumeration inherent in traditional methods. Throughout, uniqueness and orthogonality are defined with respect to a chosen measure $\mu$ and kernel family.

% --- Subsection 3.1: Problem Formulation in RKHS ---
\subsection{Problem Formulation in RKHS}
Our goal is to decompose a learned function $f: \mathcal{X} \to \mathbb{R}$, which we assume resides in a Reproducing Kernel Hilbert Space (RKHS) associated with a kernel $K$. We begin with the necessary preliminaries.

Let the input space be a product space $T = \prod_{i=1}^d T_i$ endowed with a finite product measure $\mu = \bigotimes_{i=1}^d \mu_i$. For each feature $i \in [d]$, let $k_i: T_i \times T_i \to \mathbb{R}$ be a measurable, bounded, positive-definite base kernel. We assume there exists a constant $c_i > 0$ such that the integral of the kernel is constant:
\[
\int_{T_i} k_i(x_i, t_i) \, d\mu_i(t_i) = c_i \quad \text{for all } x_i \in T_i.
\]
We define normalized base kernels $\tilde{k}_i := k_i / c_i$, which satisfy $\int_{T_i} \tilde{k}_i(x_i, t_i)\, d\mu_i(t_i)=1$. A product kernel for a non-empty feature subset $S \subseteq [d]$ is then defined as:
\[
K_S(x_S, t_S) := \prod_{i \in S} \tilde{k}_i(x_i, t_i), \qquad \text{with } K_\emptyset \equiv 1.
\]
All functional relationships are considered within the $L^2(\mu)$ space, with the standard inner product $\langle g,h\rangle := \int_T g(x)h(x)\, d\mu(x)$. The boundedness of kernels ensures that all interchanges of integrals are valid by Fubini's theorem.

% --- Subsection 3.2: Orthogonal Decomposition via Centered Kernels ---
\subsection{Orthogonal Functional Decomposition via Centered Kernels}
To isolate the pure interaction effect of a subset $S$, we define a \emph{centered kernel} $K_S^{(c)}$ constructed to be orthogonal to all lower-order effects from its proper subsets $R \subsetneq S$.

\begin{definition}[Centered Subset Kernel]
For any non-empty $S \subseteq [d]$, the centered kernel $K_S^{(c)}$ is defined recursively:
\[
K_S^{(c)} := K_S - \sum_{R \subsetneq S} K_R^{(c)}, \quad \text{with the base case } K_\emptyset^{(c)} := 1.
\]
\end{definition}

This recursive definition is equivalent to a Möbius inversion on the subset lattice, providing a direct relationship between product kernels and centered kernels.

\begin{lemma}[Möbius Inversion]\label{lem:mobius}
For any $S \subseteq [d]$,
\[
K_S = \sum_{R \subseteq S} K_R^{(c)} \quad \text{and} \quad K_S^{(c)} = \sum_{R \subseteq S} (-1)^{|S|-|R|} K_R.
\]
\end{lemma}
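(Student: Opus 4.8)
The plan is to handle the two identities separately, since the first is essentially a restatement of the defining recursion while the second is the corresponding Möbius inversion on the subset lattice. For the first identity, I would simply rearrange the definition: moving the sum across in $K_S^{(c)} = K_S - \sum_{R \subsetneq S} K_R^{(c)}$ yields
\[
K_S = K_S^{(c)} + \sum_{R \subsetneq S} K_R^{(c)} = \sum_{R \subseteq S} K_R^{(c)},
\]
where the final step reabsorbs the $R = S$ term into the sum. To make the well-posedness of the recursion explicit one can phrase this as a trivial strong induction on $|S|$, with base case $K_\emptyset = 1 = K_\emptyset^{(c)}$; no further content is needed.

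For the second identity I would recognize the first identity as the hypothesis of Möbius inversion on the Boolean lattice $(2^{[d]}, \subseteq)$. Writing $g(S) := K_S$ and $h(S) := K_S^{(c)}$, the first identity reads $g(S) = \sum_{R \subseteq S} h(R)$, and since the Möbius function of the Boolean lattice equals $(-1)^{|S|-|R|}$ for $R \subseteq S$, inversion gives $h(S) = \sum_{R \subseteq S} (-1)^{|S|-|R|} g(R)$, which is exactly the claim. Equivalently, and to keep the argument self-contained, I would define $\hat{K}_S := \sum_{R \subseteq S} (-1)^{|S|-|R|} K_R$ and verify directly that $\hat{K}_S = K_S^{(c)}$ by substituting the (already established) first identity $K_R = \sum_{Q \subseteq R} K_Q^{(c)}$ and swapping the order of summation.

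The only step carrying genuine content is the alternating-sum cancellation underlying either route. After the substitution and exchange of sums, the coefficient of each $K_Q^{(c)}$ in $\hat{K}_S$ is $\sum_{Q \subseteq R \subseteq S} (-1)^{|S|-|R|}$, which I expect to collapse to the Kronecker delta $\delta_{Q,S}$. Concretely, reindexing by $m' := |S|-|R|$ with $m := |S|-|Q|$ turns this into $\sum_{m'=0}^{m} \binom{m}{m'} (-1)^{m'} = (1-1)^m$, which vanishes for $m \geq 1$ and equals $1$ precisely when $Q = S$, by the binomial theorem. This cancellation is the one place requiring care, but it is purely combinatorial and finite: no analytic input from the $L^2(\mu)$ or RKHS structure enters, and the identities hold pointwise as equalities of functions on the product domain. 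I therefore anticipate the entire argument to be short, with the main (mild) obstacle being the bookkeeping in the double sum rather than any conceptual difficulty.
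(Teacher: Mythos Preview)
Your proposal is correct and complete. The paper states this lemma without proof, so there is nothing to compare against; your argument---rearranging the recursive definition for the first identity, then either invoking M\"obius inversion on the Boolean lattice or verifying directly via the binomial-theorem cancellation $\sum_{m'=0}^{m}\binom{m}{m'}(-1)^{m'}=(1-1)^m$---is the standard and appropriate one, and your observation that no analytic structure is needed is accurate.
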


The critical property of these centered kernels is that they integrate to zero over any of their constituent dimensions. This zero-mean property is the key mechanism that induces orthogonality.

\begin{lemma}[Partial Zero-Mean]\label{lem:partial-zero}
For any non-empty $S \subseteq [d]$ and any dimension $i \in S$,
\[
\int_{T_i} K_S^{(c)}(x_S, t_S) \, d\mu_i(t_i) = 0.
\]
\end{lemma}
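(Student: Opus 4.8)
The plan is to prove the claim directly from the Möbius inversion formula of Lemma~\ref{lem:mobius}, reducing it to a pairwise sign cancellation. First I would expand the centered kernel as $K_S^{(c)} = \sum_{R \subseteq S} (-1)^{|S|-|R|} K_R$ and, fixing a dimension $i \in S$, interchange the (finite) sum with the integral $\int_{T_i}\!\cdot\, d\mu_i(t_i)$; the interchange is immediate for a finite sum, and the boundedness of the kernels guarantees the one-dimensional integrals are well defined. It then suffices to evaluate $\int_{T_i} K_R(x_R,t_R)\, d\mu_i(t_i)$ for each $R \subseteq S$.

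The second step is to split the product kernel according to whether $i \in R$. When $i \in R$, factoring out the $i$-th factor and applying the normalization $\int_{T_i} \tilde{k}_i(x_i,t_i)\, d\mu_i(t_i) = 1$ yields $\int_{T_i} K_R\, d\mu_i = K_{R \setminus \{i\}}$. When $i \notin R$, the integrand $K_R$ does not depend on $t_i$, so the integral equals $K_R$ (here I use that each $\mu_i$ is a probability measure, $\mu_i(T_i)=1$, consistent with the product-ANOVA setting). In both cases the result is a product kernel indexed by a subset of $S \setminus \{i\}$.

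The crux is then a bookkeeping argument exhibiting exact cancellation. I would reindex the subsets containing $i$ by writing $R = R' \cup \{i\}$ with $R' \subseteq S \setminus \{i\}$; under this bijection the sign becomes $(-1)^{|S|-|R|} = (-1)^{|S|-|R'|-1} = -(-1)^{|S|-|R'|}$, while the integrated value is $K_{R'}$. Pairing this with the term coming from the same $R' \subseteq S\setminus\{i\}$ in the $i \notin R$ case—whose sign is $(-1)^{|S|-|R'|}$ and whose integrated value is also $K_{R'}$—the net coefficient of each $K_{R'}$ is $(-1)^{|S|-|R'|}\bigl(1 + (-1)\bigr) = 0$. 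Summing over all $R' \subseteq S \setminus \{i\}$ gives $\int_{T_i} K_S^{(c)}\, d\mu_i = 0$, as required.

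I expect the only delicate point to be the sign and index bookkeeping in the pairing; everything else is routine once the two integration cases are in place. An equivalent route, which I would keep as a fallback, is induction on $|S|$ directly from the recursive definition: the base case $S=\{i\}$ reduces to $\int_{T_i}(\tilde{k}_i - 1)\, d\mu_i = 0$, and in the inductive step the terms $K_R^{(c)}$ with $i \in R \subsetneq S$ vanish by the inductive hypothesis, while those with $i \notin R$ reassemble, via Lemma~\ref{lem:mobius} applied to $S\setminus\{i\}$, into exactly $K_{S\setminus\{i\}} = \int_{T_i} K_S\, d\mu_i$, again cancelling.
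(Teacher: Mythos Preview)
Your primary argument is correct and takes a different route from the paper: you expand $K_S^{(c)}$ via the M\"obius formula and exhibit explicit pairwise sign cancellation, whereas the paper argues by induction on $|S|$---which is precisely your fallback route. Both proofs ultimately rest on Lemma~\ref{lem:mobius}, but on different halves: you use the inverse form $K_S^{(c)} = \sum_{R \subseteq S}(-1)^{|S|-|R|} K_R$ to reduce everything to integrated product kernels, while the paper's inductive step uses the forward form $K_{S\setminus\{i\}} = \sum_{R \subseteq S\setminus\{i\}} K_R^{(c)}$ to reassemble the surviving terms after the inductive hypothesis kills those with $i \in R$. Your direct argument is arguably cleaner---the cancellation mechanism is fully explicit and no inductive hypothesis has to be carried---at the cost of the sign bookkeeping you flagged. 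You are also right to note the implicit assumption $\mu_i(T_i)=1$: it is needed in both proofs (already the base case $\int_{T_i}(\tilde{k}_i-1)\,d\mu_i = 1 - \mu_i(T_i)$ fails otherwise), though the paper does not state it explicitly.
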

\begin{proof}
By induction on $|S|$. Using the property $\int_{T_i} K_S \, d\mu_i = K_{S \setminus \{i\}}$ and the relationship from Lemma~\ref{lem:mobius}, all lower-order terms recursively cancel, yielding zero.
\end{proof}

This directly leads to our main result on orthogonality, which forms the foundation for the decomposition.

\begin{theorem}[Orthogonality of Centered Kernels]\label{thm:orthogonality}
For any two distinct subsets $S,J \subseteq [d]$ and any fixed anchors $t_S,t_J$, the corresponding centered kernel functions are orthogonal in $L^2(\mu)$:
\[
\big\langle K_S^{(c)}(\cdot_S, t_S), \, K_J^{(c)}(\cdot_J, t_J) \big\rangle = 0 .
\]
\end{theorem}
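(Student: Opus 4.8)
The plan is to reduce the claim to the vanishing of a single one-dimensional integral. Since $S \neq J$, their symmetric difference $S \,\triangle\, J$ is non-empty, so I can fix an index $i$ lying in exactly one of the two subsets; without loss of generality assume $i \in S \setminus J$. The functions $K_S^{(c)}(\cdot_S, t_S)$ and $K_J^{(c)}(\cdot_J, t_J)$ depend on the integration variable $x$ only through the coordinates in $S$ and $J$ respectively, so the inner product is
\[
\big\langle K_S^{(c)}(\cdot_S,t_S),\,K_J^{(c)}(\cdot_J,t_J)\big\rangle = \int_T K_S^{(c)}(x_S,t_S)\,K_J^{(c)}(x_J,t_J)\,d\mu(x).
\]
Because each base kernel is bounded, the integrand is $\mu$-integrable and Fubini's theorem lets me reorder the integration freely.

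First I would integrate out the distinguishing coordinate $x_i$. Since $i \notin J$, the factor $K_J^{(c)}(x_J,t_J)$ is constant in $x_i$ and pulls out of the inner integral, leaving $\int_{T_i} K_S^{(c)}(x_S,t_S)\,d\mu_i(x_i)$ as the only $x_i$-dependent piece. To evaluate this I invoke the Partial Zero-Mean property (Lemma~\ref{lem:partial-zero}). That lemma is stated for integration against the \emph{anchor} coordinate $t_i$, whereas here the integration is against $x_i$; the two coincide because every centered kernel is symmetric in its two arguments. Indeed, each $\tilde{k}_i$ is a symmetric positive-definite kernel, so every product kernel $K_R$ is symmetric, and by the Möbius expansion $K_S^{(c)} = \sum_{R\subseteq S}(-1)^{|S|-|R|}K_R$ (Lemma~\ref{lem:mobius}) the centered kernel inherits this symmetry. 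Hence the inner $x_i$-integral equals zero, and by Fubini the entire multiple integral—and therefore the inner product—vanishes.

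The main obstacle I anticipate is bookkeeping rather than deep analysis: one must verify that the chosen coordinate $i$ genuinely appears in only one of the two kernels so that the other factors cleanly out of the $x_i$-integral, and one must be careful to route the Partial Zero-Mean lemma through the symmetry argument, since the lemma is phrased in the opposite argument slot. The edge case $J = \emptyset$ (so $K_J^{(c)} \equiv 1$) is handled identically: $S$ is then non-empty, any $i \in S$ serves as the distinguishing index, and the single-coordinate integral of $K_S^{(c)}$ again vanishes. No assumption beyond boundedness (for Fubini) together with the already-established zero-mean and Möbius lemmas is needed; notably the argument also covers nested subsets such as $S \subsetneq J$, since only non-emptiness of $S \,\triangle\, J$ is used.
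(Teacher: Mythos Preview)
Your proof is correct and follows essentially the same approach as the paper's: choose $i \in S \setminus J$, factor $K_J^{(c)}$ out of the $x_i$-integral via Fubini, and invoke Lemma~\ref{lem:partial-zero} to kill the remaining one-dimensional integral. You are in fact more careful than the paper in one respect---you explicitly note that Lemma~\ref{lem:partial-zero} is stated for the anchor slot $t_i$ and bridge the gap via kernel symmetry through the M\"obius expansion, a point the paper's proof glosses over.
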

\begin{proof}
Without loss of generality, assume there exists an element $i \in S \setminus J$. Since $K_J^{(c)}$ does not depend on the variable $x_i$, we can integrate with respect to $x_i$ first and apply Lemma~\ref{lem:partial-zero} to show the entire integral is zero.
\end{proof}

% --- Subsection 3.3: From Decomposition to Explanation ---
\subsection{From Decomposition to Explanation}
The orthogonality of the centered kernels allows us to define a series of mutually orthogonal subspaces, one for each feature subset $S$. Let $\mathcal{H}_S := \overline{\mathrm{span}}\{K_S^{(c)}(\cdot_S, t_S) : t_S \in T_S\}$ be the closed linear span of centered kernels for subset $S$. Theorem~\ref{thm:orthogonality} ensures that $\langle f_S, f_J \rangle = 0$ for any $f_S \in \mathcal{H}_S$ and $f_J \in \mathcal{H}_J$ where $S \neq J$.

We can then decompose the total function space $\mathcal{H}$ into a direct sum of these orthogonal subspaces:
\[
\mathcal{H} \;:=\; \overline{\bigoplus_{S \subseteq [d]} \mathcal{H}_S} \;\subseteq\; L^2(\mu).
\]
If the base kernels are universal, $\mathcal{H}$ can be dense in $L^2(\mu)$~\cite{Steinwart2001,Micchelli2006Universality}, ensuring that any well-behaved function $f$ can be well-approximated. This structure provides an analytical formula for projecting $f$ onto each subspace to find its unique components.

\begin{proposition}[Orthogonal Decomposition of $f$]\label{prop:decomposition}
For any function $f \in \mathcal{H}$, there exists a unique decomposition into orthogonal components $f_S \in \mathcal{H}_S$ such that
\[
f \;=\; \sum_{S \subseteq [d]} f_S.
\]
Each component is computed via an orthogonal projection:
\[
f_S(x_S) \;=\; \big\langle f,\, K_S^{(c)}(\cdot_S, x_S) \big\rangle .
\]
\end{proposition}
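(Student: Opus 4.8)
The plan is to read this as a standard orthogonal-decomposition statement in the Hilbert space $\mathcal{H}$, leaning on the orthogonality already proven in Theorem~\ref{thm:orthogonality}, and then to reduce the projection formula to a single reproducing-kernel identity on one subspace. The first observation I would make is that there are only finitely many subsets $S \subseteq [d]$, each $\mathcal{H}_S$ is closed by definition, and by Theorem~\ref{thm:orthogonality} the family $\{\mathcal{H}_S\}_{S\subseteq[d]}$ is mutually orthogonal in $L^2(\mu)$. Since a finite sum of mutually orthogonal closed subspaces of a Hilbert space is again closed, the closure in the definition of $\mathcal{H}$ is vacuous and $\mathcal{H} = \bigoplus_{S\subseteq[d]} \mathcal{H}_S$ is a genuine orthogonal direct sum. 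The orthogonal-decomposition theorem then gives, for every $f \in \mathcal{H}$, an expansion $f = \sum_S f_S$ with $f_S \in \mathcal{H}_S$, which is existence.

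For uniqueness I would suppose $\sum_S f_S = \sum_S g_S$ with $f_S, g_S \in \mathcal{H}_S$, rearrange to $\sum_S (f_S - g_S) = 0$, and apply Pythagoras: since the summands lie in mutually orthogonal subspaces, $0 = \lVert \sum_S (f_S - g_S)\rVert^2 = \sum_S \lVert f_S - g_S\rVert^2$, forcing $f_S = g_S$ for every $S$. The orthogonality of the components $\langle f_S, f_J\rangle = 0$ for $S \neq J$ is then inherited directly from Theorem~\ref{thm:orthogonality}.

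For the projection formula, fix $S$ and an anchor $x_S$, and note that $K_S^{(c)}(\cdot_S, x_S) \in \mathcal{H}_S$ by the very definition of $\mathcal{H}_S$ as the closed span of such functions. Expanding $f = \sum_J f_J$ and using bilinearity,
\[
\big\langle f,\, K_S^{(c)}(\cdot_S, x_S)\big\rangle = \sum_{J\subseteq[d]} \big\langle f_J,\, K_S^{(c)}(\cdot_S, x_S)\big\rangle .
\]
Every term with $J \neq S$ vanishes because $f_J \in \mathcal{H}_J$ is orthogonal to $K_S^{(c)}(\cdot_S, x_S) \in \mathcal{H}_S$ (Theorem~\ref{thm:orthogonality}), leaving $\langle f, K_S^{(c)}(\cdot_S, x_S)\rangle = \langle f_S, K_S^{(c)}(\cdot_S, x_S)\rangle$. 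It then remains only to identify this with the point value $f_S(x_S)$.

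That last identification is where I expect the real work, and it is the main obstacle. To handle it I would first use Lemma~\ref{lem:mobius} together with $\int \tilde{k}_i(x_i,t_i)\,d\mu_i(t_i)=1$ to obtain the factorization $K_S^{(c)}(x_S,t_S) = \prod_{i\in S}\big(\tilde{k}_i(x_i,t_i)-1\big)$, recognizing each one-dimensional factor as a positive-definite centered kernel whose RKHS is the space of $\mu_i$-mean-zero functions, so that $K_S^{(c)}$ is the reproducing kernel of the tensor-product space $\mathcal{H}_S$. The delicate point is the compatibility between the RKHS reproducing property, which lives in the RKHS norm, and the $L^2(\mu)$ pairing used in the statement: reproducing under $\langle\cdot,\cdot\rangle$ is equivalent to the centered Mercer operator $g \mapsto \langle g, K_S^{(c)}(\cdot_S,\cdot)\rangle$ acting as the identity on $\mathcal{H}_S$, i.e.\ to its idempotency on the mean-zero subspace. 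I would verify the identity $\langle g, K_S^{(c)}(\cdot_S, x_S)\rangle = g(x_S)$ first on the spanning anchors $g = K_S^{(c)}(\cdot_S, s_S)$ — using the partial zero-mean property (Lemma~\ref{lem:partial-zero}) to collapse the relevant integrals — and then extend by linearity, density of the span in $\mathcal{H}_S$, and continuity of evaluation. Securing this reproducing identity under the stated normalization and Mercer structure is the crux; combined with the cross-term cancellation above it yields $\langle f, K_S^{(c)}(\cdot_S, x_S)\rangle = f_S(x_S)$ and completes the proof.
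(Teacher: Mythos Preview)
Your approach matches the paper's: existence and uniqueness come from standard Hilbert-space orthogonal projection onto the finite orthogonal direct sum $\bigoplus_{S}\mathcal{H}_S$, and the projection formula is attributed to the reproducing property within $\mathcal{H}_S$. The paper's proof is a two-sentence sketch that does not spell out the $L^2(\mu)$-versus-RKHS inner-product compatibility you correctly flag as the crux; your more careful treatment of that point (factoring $K_S^{(c)}$ via Lemma~\ref{lem:mobius} and verifying the reproducing identity on anchors before extending) goes beyond what the paper actually provides.
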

\begin{proof}
The existence and uniqueness of the decomposition are standard results for orthogonal projections in a Hilbert space. The formula for $f_S$ is derived from the reproducing property within the subspace $\mathcal{H}_S$.
\end{proof}

Finally, to connect our functional view back to the familiar world of scalar feature attributions, we can aggregate the components. This aggregation produces values that satisfy the desirable axioms of Shapley values.

\begin{proposition}[Shapley-Compatible Aggregation]\label{prop:shapley}
Given the decomposition $f = \sum_S f_S$ where $f_\varnothing = \mathbb{E}_{\mu}[f]$ is the baseline, we define the scalar attribution for feature $i$ as:
\[
\phi_i(x) := \sum_{S \ni i} \frac{1}{|S|}\, f_S(x_S).
\]
The set of attributions $\{\phi_i(x)\}_{i=1}^d$ satisfies the efficiency, symmetry, dummy, and linearity axioms.
\end{proposition}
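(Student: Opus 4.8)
The plan is to verify the four axioms one at a time, treating them as properties of the aggregation map $f \mapsto \{\phi_i\}$ induced by the decomposition of Proposition~\ref{prop:decomposition}. Three of the four follow almost immediately from structure already in place, and I expect the symmetry axiom to be the only genuinely delicate part, since it forces me to track how the entire centered-kernel construction behaves under a permutation of coordinates.

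\emph{Efficiency.} First I would interchange the order of summation in the total attribution. Writing
\[
\sum_{i=1}^d \phi_i(x) = \sum_{i=1}^d \sum_{S \ni i} \frac{1}{|S|}\, f_S(x_S) = \sum_{S \neq \varnothing} \frac{f_S(x_S)}{|S|} \sum_{i \in S} 1 = \sum_{S \neq \varnothing} f_S(x_S),
\]
the inner count $\sum_{i \in S} 1 = |S|$ exactly cancels the weight $1/|S|$. Since the full decomposition reads $f = f_\varnothing + \sum_{S \neq \varnothing} f_S$ with $f_\varnothing = \mathbb{E}_\mu[f]$, this gives $\sum_i \phi_i(x) = f(x) - \mathbb{E}_\mu[f]$, which is efficiency relative to the baseline.

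\emph{Linearity and dummy.} Linearity is inherited from the projection formula $f_S(x_S) = \langle f, K_S^{(c)}(\cdot_S, x_S) \rangle$ of Proposition~\ref{prop:decomposition}, which is linear in $f$; hence for $f = \alpha g + \beta h$ each component, and therefore each $\phi_i$, splits as $\alpha \phi_i^g + \beta \phi_i^h$. For the dummy axiom I would take a feature $i$ on which $f$ does not depend and show every component with $i \in S$ vanishes. Integrating the projection integral over the $i$-th coordinate first (valid by Fubini under the boundedness assumption) and pulling the $x_i$-independent factor $f$ outside, the remaining factor is $\int_{T_i} K_S^{(c)}\, d\mu_i = 0$ by the Partial Zero-Mean Lemma~\ref{lem:partial-zero}, using that $K_S^{(c)}$ is symmetric in its two arguments (being a Möbius combination of symmetric product kernels). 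Thus $f_S \equiv 0$ for all $S \ni i$, and $\phi_i \equiv 0$.

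\emph{Symmetry.} This is the step I expect to be the main obstacle, because it relies on an equivariance property not recorded explicitly above. I would assume the symmetric setup $\mu_i = \mu_j$, $k_i = k_j$ and that $f$ is invariant under the transposition $\sigma = (i\,j)$ of coordinates. The crux is to prove that the construction commutes with $\sigma$, namely $K_{\sigma(S)}^{(c)}(\sigma u, \sigma x) = K_S^{(c)}(u, x)$; this follows by induction on $|S|$ from the recursive definition, using that the product kernel $K_S$ is manifestly permutation-equivariant once the equal base kernels are swapped. With this in hand, a measure-preserving change of variables $u = \sigma v$ in the projection integral, combined with $f \circ \sigma = f$, yields the component identity $f_{\sigma(S)}(\sigma x) = f_S(x)$. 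Finally I would reindex the sum defining $\phi_j(\sigma x)$ by $S = \sigma(R)$: since $j \in S \iff i \in R$ and $|S| = |R|$, this converts $\phi_j(\sigma x)$ into $\sum_{R \ni i} \frac{1}{|R|}\, f_R(x_R) = \phi_i(x)$, so at any point fixed by $\sigma$ (in particular $x_i = x_j$) we get $\phi_i(x) = \phi_j(x)$. The only genuinely technical bookkeeping is the equivariance induction and confirming that the restriction indices match correctly under $\sigma$.
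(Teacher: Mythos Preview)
Your proposal is correct and follows the same approach as the paper: efficiency by reindexing the double sum, and the remaining axioms via the linearity of the projection formula and the symmetric/equivariant structure of the centered-kernel construction. The paper's proof is a two-sentence sketch, so your write-up is essentially a careful fleshing-out of it; in particular, your use of Lemma~\ref{lem:partial-zero} for the dummy axiom and the permutation-equivariance induction for symmetry are exactly the details the paper's phrase ``symmetric construction'' is gesturing at.
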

\begin{proof}
Efficiency ($\sum_i \phi_i(x) = f(x) - f_\varnothing$) follows from rearranging the sum. The other axioms follow from the symmetric construction and the linearity of the inner product used for projections.
\end{proof}

\noindent In our practical implementation, all integrals over the data distribution $\mu$ are approximated by empirical averages over finite samples, and projections are computed numerically. Therefore, properties like orthogonality and completeness hold up to a controllable numerical tolerance. The performance and stability of this procedure depend on implementation choices like kernel selection, rank truncation for efficiency, and regularization, as detailed in our experiments (Sec.~\ref{sec:experiments}) and Appendix~\ref{app:implementation-details}.

\vspace{1em}

% ===================================================================
% SECTION 4: EXPERIMENTAL RESULTS 
% ===================================================================
\section{Experimental Results}
\label{sec:experiments}

\noindent
We conduct a series of experiments on public tabular datasets to validate STRIDE. Our evaluation is designed to first demonstrate the \textbf{novel analytical capabilities} unlocked by STRIDE's functional decomposition paradigm (Sec.~\ref{ssec:functional-insights} and~\ref{ssec:component-surgery}). We then show that these benefits are achieved without compromising, and often improving upon, standard performance metrics compared to a strong, model-specific baseline (Sec.~\ref{ssec:benchmarks}).

% --- Subsection 4.1: Experimental Setup (No major changes) ---
\subsection{Experimental Setup}

\paragraph{Baseline}
We use \textbf{TreeSHAP}~\cite{Lundberg2019TreeSHAP} as the primary reference because it is an efficient, polynomial-time method tailored to tree ensembles (e.g., Random Forest, XGBoost). This choice probes STRIDE against a strong, model-specific baseline. Unless otherwise noted, all reported results are averaged over 10 random seeds (mean $\pm$ std).

\vspace{0.6em}
\paragraph{Datasets and Models}
We consider public tabular datasets for regression and classification, including a higher-dimensional case \textbf{YearPredictionMSD} ($d{=}90$) and medium-dimensional sets such as \textbf{German Credit} (after one-hot encoding). Models are scikit-learn \texttt{RandomForestRegressor}/\texttt{Classifier} with standard hyperparameters. Experiments run on a MacBook Air (M1, 8GB RAM) without GPU acceleration using the \texttt{rkhs-ortho} engine of STRIDE. For datasets with categorical variables, one-hot encoding is applied, which increases the effective dimensionality, reflecting typical tabular pipelines.

\vspace{0.6em}
\paragraph{Evaluation Metrics}
We report: (i) \textbf{wall-clock time (s)} to compute explanations; (ii) \textbf{fidelity} ($R^2$) between the model output $f(x)$ and STRIDE's reconstruction $f_0 + \sum_S f_S(x_S)$; and (iii) \textbf{global rank agreement} (Spearman correlation $\rho$) between mean absolute attributions of STRIDE and TreeSHAP.

% --- Subsection 4.2: Unlocking New Analytical Capabilities ---
\subsection{Unlocking New Analytical Capabilities with Functional Decomposition}
\label{ssec:functional-insights}

\noindent
A primary advantage of STRIDE is its ability to move beyond scalar attributions ($ \phi_i $) to the functional components ($ f_S(x_S) $) themselves. This allows us to shift the question from \emph{what} features are important to \emph{how} the model learns and utilizes their relationships. Figure~\ref{fig:synergy-whatif} showcases these unique capabilities on the California Housing dataset, revealing insights that align remarkably with real-world domain knowledge.

\vspace{0.8em}
\paragraph{Revealing Interaction Synergy and Redundancy}
The synergy heatmap (Fig.~\ref{fig:synergy-whatif}, left), derived from pairwise interaction components ($f_{ij}$), visualizes the nature of feature relationships learned by the model. It uncovers a strong \textbf{redundancy (negative synergy, blue)} between \texttt{Latitude} and \texttt{Longitude}, correctly identifying that these geographic coordinates provide overlapping information. More impressively, STRIDE discovers a strong \textbf{positive synergy (red)} between \texttt{Longitude} and \texttt{Population}. This aligns perfectly with the domain knowledge that California's most valuable real estate is concentrated in dense coastal areas (i.e., low \texttt{Longitude} and high \texttt{Population}). Such directional and structural insights are unavailable from standard scalar attribution methods.

\vspace{0.8em}
\paragraph{Enabling "What-if" Analysis for Model Debugging}
The what-if analysis (Fig.~\ref{fig:synergy-whatif}, right) serves as a powerful diagnostic tool for probing a model's internal logic. When we simulate an increase in the value of \texttt{MedInc} (median income), the model's reliance on its proxy features, \texttt{Latitude} and \texttt{Longitude}, drastically decreases. This behavior is highly rational: location often serves as a proxy for income in real estate. STRIDE reveals that the model has learned this relationship, demonstrating its utility not just for explanation, but for model validation and debugging.

\begin{figure}[h]
\begin{center}
\includegraphics[width=0.38\textwidth]{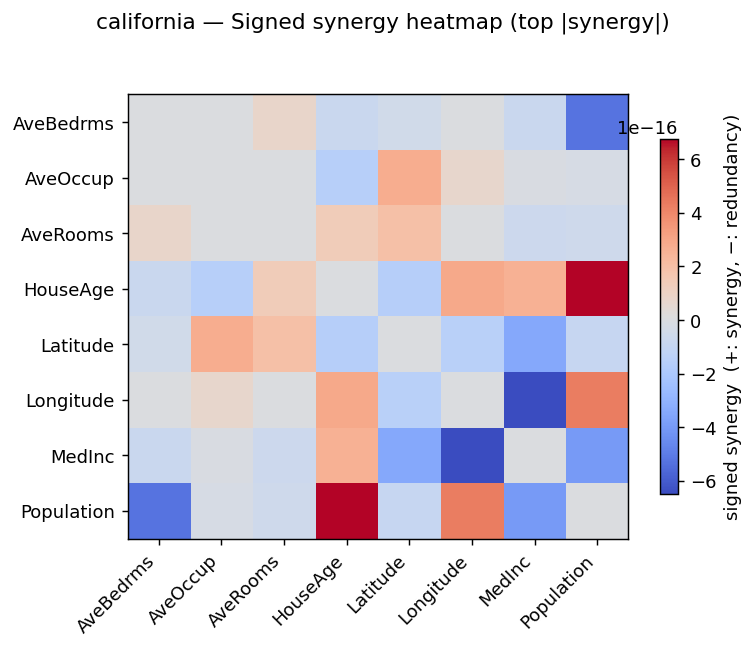}
\includegraphics[width=0.58\textwidth]{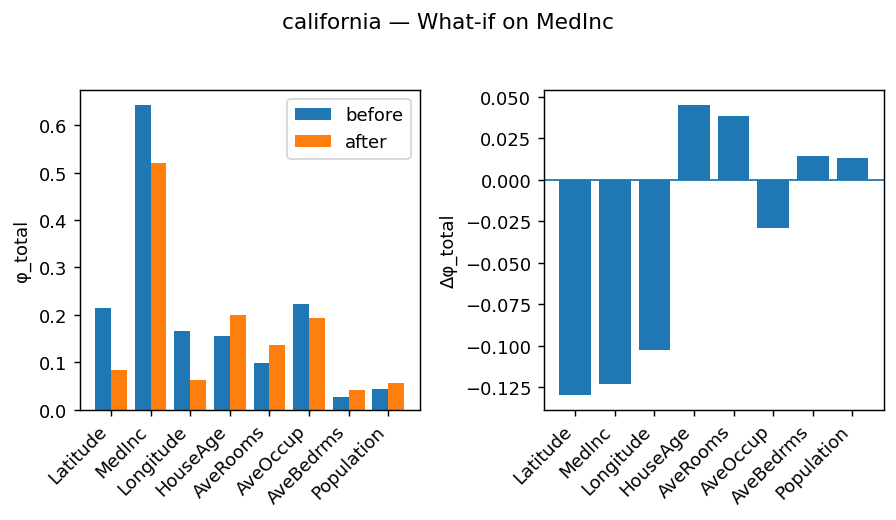}
\caption{STRIDE unlocks deeper model insights than scalar methods. \textbf{(Left)} The synergy heatmap reveals domain-consistent knowledge without prior information: a strong redundancy (blue) between geographic coordinates (\texttt{Latitude}, \texttt{Longitude}) and a positive synergy (red) between \texttt{Longitude} and \texttt{Population}, reflecting California's coastal density. \textbf{(Right)} A "what-if" analysis simulates the model's internal logic; increasing \texttt{MedInc} correctly reduces the model's reliance on its proxy features (\texttt{Latitude}, \texttt{Longitude}), demonstrating STRIDE's power as a model debugging and validation tool.}
\label{fig:synergy-whatif}
\end{center}
\end{figure}

% --- Subsection 4.3: Component Surgery ---
\subsection{Quantitative Validation: The Functional Necessity of Interactions}
\label{ssec:component-surgery}

\noindent
To move beyond qualitative inspection, we introduce a novel quantitative analysis we term \textbf{`component surgery.'} This procedure leverages STRIDE's decomposition to first identify a critical interaction component (e.g., the most impactful pairwise function $f_{ij}$) and then computationally ``surgically remove" it from the model's prediction function to measure its direct impact on performance. 

This analysis provides a powerful answer to the question: ``Is this interaction just a statistical artifact, or is it essential to the model's logic?" On the California Housing dataset, removing just the single most impactful interaction component resulted in a significant performance drop, with the model's test $R^2$ decreasing by \textbf{0.0232 $\pm$ 0.0035}. This quantitatively proves that the interactions identified by STRIDE are not minor artifacts but are, in fact, critical, load-bearing components of the model's logic. To our knowledge, this is the first time the direct performance impact of a single, high-order interaction within a black-box model has been isolated and measured in this way.

% --- Subsection 4.4: Benchmark Performance ---
\subsection{Benchmark Performance}
\label{ssec:benchmarks}

\noindent
Having established STRIDE's novel analytical capabilities, we now demonstrate that these advantages do not come at the cost of performance on standard XAI benchmarks. Table~\ref{tab:main-results} summarizes our comparison against the highly-optimized, model-specific TreeSHAP across 10 public datasets. All results should be interpreted within our specific software/hardware configuration.

\begin{table}[t]
\begin{center}
\caption{Seed-averaged benchmark results (mean over 10 seeds). Times are in seconds. The faster method per dataset is shown in \textbf{bold}.}
\label{tab:main-results}
\small
\begin{tabularx}{\textwidth}{@{} l l c *{6}{>{\centering\arraybackslash}X} @{}}
\toprule
\textbf{Dataset} & \textbf{Model} & \textbf{d} & \textbf{STRIDE time} & \textbf{TreeSHAP time} & \textbf{Speedup} & \textbf{R$^2$} & \textbf{Spearman (main,total)} \\
\midrule
YearPredictionMSD & RF Regressor   & 90 & \textbf{72.613} & 168.976 & 2.3$\times$ & 0.808 & 0.549,\;0.553 \\
German Credit     & RF Classifier  & 57 & \textbf{0.131}  & 0.240   & 1.9$\times$ & 0.958 & 0.862,\;0.842 \\
\midrule
Breast Cancer     & RF Classifier  & 30 & 0.069  & \textbf{0.038}  & 0.6$\times$ & 0.999 & 0.728,\;0.736 \\
Credit Default    & RF Classifier  & 23 & \textbf{1.609}  & 11.679  & 7.3$\times$ & 0.988 & 0.930,\;0.945 \\
Online Shoppers   & RF Classifier  & 25 & \textbf{1.039}  & 3.914   & 3.8$\times$ & 0.965 & 0.920,\;0.898 \\
Heart Disease     & RF Classifier  & 13 & \textbf{0.030}  & 0.048   & 1.6$\times$ & 0.999 & 0.929,\;0.866 \\
\midrule
California Housing & RF Regressor & 8  & \textbf{0.550}  & 5.331   & 9.7$\times$ & 0.932 & 0.952,\;0.955 \\
Abalone           & RF Regressor   & 10 & \textbf{0.418}  & 1.084   & 2.6$\times$ & 0.958 & 0.881,\;0.878 \\
Wine Quality (Red)& RF Regressor   & 11 & \textbf{0.624}  & 1.848   & 3.0$\times$ & 0.903 & 0.821,\;0.804 \\
Diabetes          & RF Regressor   & 10 & \textbf{0.041}  & 0.116   & 3.0$\times$ & 0.993 & 0.779,\;0.747 \\
\bottomrule
\end{tabularx}
\end{center}
\end{table}

\vspace{0.6em}
\paragraph{Efficiency}
Across the 10 datasets, STRIDE's speedup over TreeSHAP ranged from 0.6$\times$ (where TreeSHAP was faster on a small task) to 9.7$\times$ (California), with a median of approximately \textbf{2.9$\times$}. On the high-dimensional YearPredictionMSD dataset, STRIDE was about \textbf{2.3$\times$} faster on average. This performance is consistent with STRIDE’s analytic, subset-enumeration-free design.

\vspace{0.6em}
\paragraph{Fidelity and Rank Agreement}
STRIDE's reconstructions maintained high fidelity with the original model predictions, with $R^2$ values ranging from 0.81 to 0.999. The global feature importance rankings showed substantial agreement with TreeSHAP on most datasets (e.g., Spearman correlation $\approx$0.92–0.96 on California and Credit Default). Lower agreement on weak-signal tasks like YearPredictionMSD ($\approx$0.55) is likely attributable to the inherent instability of attributions in such regimes, rather than a shortcoming of either method.

% --- Subsection 4.5: Summary and Scope (renumbered) ---
\subsection{Summary and Scope of Experiments}
\noindent
Within our evaluated tabular settings, STRIDE not only provides novel functional insights but also produces high-fidelity explanations with competitive or superior runtime compared to TreeSHAP on several datasets. These observations do not automatically generalize to all model families or hardware configurations; broader validation (e.g., on deep learning models, with GPU acceleration) is a direction for future work. Implementation details are provided in Appendix~\ref{app:implementation-details}.

\vspace{1em}

% =================================================================
% SECTION 5: DISCUSSION 
% ===================================================================
\section{Discussion}
\label{sec:discussion}

\noindent
Our work positions STRIDE as a significant step beyond scalar attribution, offering a framework that is not only computationally efficient but also provides deeper, functional insights into model behavior. By synthesizing the theoretical and empirical results, we argue that STRIDE helps bridge the critical gap between local, scalar-based explanations and a global, functional understanding of a model's learned logic. This discussion contextualizes our findings, situates STRIDE within the broader XAI landscape, and outlines key limitations and future directions.

% --- Subsection 5.1: A New Paradigm ---
\subsection{A New Paradigm: Beyond `What' to `How'}
The primary conceptual contribution of STRIDE is its ability to access and analyze the functional components $f_S(x_S)$ of a model's decision function. This fundamentally shifts the goal of explanation from identifying \emph{what} features are important (a scalar view) to understanding \emph{how} they collectively operate (a functional view). Our experiments demonstrate the practical value of this paradigm shift. The functional components recovered by STRIDE are not merely abstract constructs; they reveal concrete, domain-consistent feature relationships—such as synergy and redundancy—that are obscured by single numerical scores (Fig.~\ref{fig:synergy-whatif}).

Furthermore, our `component surgery' analysis (Sec.~\ref{ssec:component-surgery}) provides quantitative, causal-like evidence of their importance. By showing that the removal of a single interaction component can significantly degrade model performance, we validate that these functions are not statistical artifacts but are integral to the model's predictive logic. This opens the door to more sophisticated model debugging, validation, and scientific discovery, where the structure of interactions is often as important as the influence of individual variables.

% --- Subsection 5.2: Practical Advantages: Performance and Fidelity ---
\subsection{Practical Advantages: Performance and Fidelity}
Crucially, this leap in expressive power does not come at the expense of practical performance. Our benchmarks show that STRIDE is computationally competitive, achieving a median speedup of $\approx$3.0$\times$ over the highly optimized, model-specific TreeSHAP across 10 diverse datasets. On the high-dimensional YearPredictionMSD dataset ($d=90$), STRIDE was $\approx$2.3$\times$ faster, demonstrating its scalability. While a specialized method like TreeSHAP can be faster on small, simple tasks (e.g., Breast Cancer), STRIDE's performance on more complex datasets validates its efficiency as a model-agnostic tool.

This efficiency is paired with high fidelity. With reconstruction $R^2$ values consistently between 0.81 and 0.999, STRIDE's decomposition accurately reflects the output of the original black-box model. The substantial rank agreement with TreeSHAP on most datasets further suggests that STRIDE's functional approach aligns with established attribution principles while providing a much richer layer of detail.

% --- Subsection 5.3: Positioning STRIDE in the XAI Landscape ---
\subsection{Positioning STRIDE in the XAI Landscape}
Table~\ref{tab:xai-framework} situates STRIDE relative to other representative XAI frameworks. While methods like LIME, SHAP, and Integrated Gradients have been foundational, they primarily deliver scalar attributions ($\phi_i$). STRIDE is unique in its ability to provide both Shapley-compatible scalar attributions and the underlying functional components ($f_S$) within a model-agnostic, computationally efficient framework. Unlike KANs~\cite{Kan2024}, which build interpretability into the model architecture, STRIDE operates post-hoc, making it applicable to any pre-trained model in the tabular settings we study. It thus fills a critical gap, offering a structured, high-order explanation capability that was previously associated with either model-specific methods or computationally prohibitive approaches.

\begin{table}[t]
\begin{center}
\caption{Comparison of STRIDE with representative XAI frameworks across key interpretability dimensions.}
\label{tab:xai-framework}
\small
\begin{tabularx}{\textwidth}{@{} l l c *{6}{>{\centering\arraybackslash}X} @{}}
\toprule
\textbf{Method} & \textbf{Target Model} & \textbf{Basis} & \textbf{Explanation Unit} & \textbf{High-order} & \textbf{Runtime} \\
\midrule
LIME & Any & Linear fit & $\phi_i$ &  & High \\
SHAP (KernelSHAP) & Any & Shapley values & $\phi_i$ &  & Very High \\
SHAP (TreeSHAP) & Tree model & Tree structure + Shapley & $\phi_i$ & \ding{51} (partial) & Low \\
IG / DeepLIFT & Neural Net & Path integral & $\phi_i$ &  & Very Low \\
KAN & Neural Net & Spline activations & functions & (implicit) & Medium \\
\textbf{STRIDE} & \textbf{Any*} & \textbf{RKHS + Centered Kernels} & \textbf{$\phi_i$ + $f_S$} & \textbf{\ding{51}} & \textbf{Low-Med} \\
\bottomrule
\end{tabularx}
\begin{tablenotes}
\footnotesize
\item \textit{Note:}
\textbf{Target Model} indicates the class of models supported. *STRIDE is model-agnostic in the tabular settings studied here.
\textbf{Basis} refers to the theoretical foundation.
\textbf{Explanation Unit} distinguishes scalar attribution ($\phi_i$) from functional decomposition ($f_S$).
\textbf{High-order} specifies if multi-feature interactions are explicitly captured.
\textbf{Runtime} is a qualitative measure of computational cost.
\end{tablenotes}
\end{center}
\end{table}

% --- Subsection 5.4: Limitations and Future Directions ---
\subsection{Limitations and Future Directions}
\label{subsec:limitations-future}

Our claims are scoped by our experimental setup. Numerically, the results depend on design choices such as kernel type and low-rank approximation size. However, the core principles of our framework are general. While our work focuses on tabular data, the principles of subset-free functional decomposition lay the groundwork for future adaptations to more complex domains like vision and NLP, a promising direction for the community.

Beyond this, promising next steps include: (i) broader evaluations across model classes (e.g., gradient-boosted trees, deep networks) and hardware (multi-core/GPU); (ii) adaptive configurations, such as data-driven rank selection and uncertainty bands for the functional components $f_S$; (iii) efficient solvers and selection strategies for higher-order interactions; and (iv) user studies to assess the practical utility of functional views for decision-making. Incorporating causal structure where appropriate remains an important but distinct line of future inquiry.

\vspace{1em}

% =================================================================
% SECTION 6: CONCLUSION
% ===================================================================
\section{Conclusion}
\label{sec:conclusion}

\noindent
This paper introduced \textbf{STRIDE}, a framework designed to address a fundamental limitation in explainable AI: the inadequacy of single scalar values for capturing the rich, interactive nature of complex models. We reframed the explanation task as an orthogonal functional decomposition in an RKHS, developing a subset-enumeration-free procedure based on recursively centered kernels. Our theoretical results establish the mathematical rigor of this decomposition, providing a firm foundation for recovering unique, orthogonal functional components $f_S(x_S)$.

Our empirical evaluation demonstrated the practical viability of this approach. In the tabular settings studied here, STRIDE is computationally efficient and maintains high fidelity to the underlying model. More importantly, we showed that the functional perspective unlocked by STRIDE enables a deeper class of diagnostics. Analyses like `component surgery' move beyond asking \emph{what} features are important to revealing \emph{how} they work together, quantitatively confirming that these interactions are functionally essential to a model's performance.

The functional decomposition paradigm presented here has implications beyond simple explanation. It offers a path toward debugging the internal logic of AI models, exploring potential scientific insights hidden in their learned representations, and fundamentally enhancing the trustworthiness of AI in safety-critical domains such as finance and medicine. By enabling a deeper dialogue with our AI counterparts, STRIDE is not merely an explanatory tool, but a conceptual leap toward a future where we, in partnership with AI, can better understand our world and drive true innovation.

% References

\newpage

\newpage

% ===================================================================
% APPENDICES 
% ===================================================================
\appendix

% --- Appendix A ---
\section{Implementation Details}
\label{app:implementation-details}

\noindent
This appendix outlines the key implementation choices for the STRIDE framework's \texttt{rkhs-ortho} engine, which was used for all experiments in this paper. The details provided are intended to ensure clarity and support reproducibility, while omitting system-specific code optimizations.

\subsection{Kernel Selection and Parameterization}
Our framework supports standard kernel functions, including the Radial Basis Function (RBF), Laplace, and Polynomial kernels. Kernel hyperparameters (e.g., bandwidth) are set using standard heuristics based on the median pairwise distance between data points, adapted per feature. The effective ranks of the low-rank kernel approximations for main and interaction effects are treated as hyperparameters. To maintain efficiency, only a subset of feature pairs is explicitly modeled for interactions, guided by a simple proxy criterion (e.g., a feature dependence score).

\subsection{Numerical Stability and Orthogonalization}
As is common in kernel methods, numerical stability is critical. Our implementation improves stability through standard techniques like ridge regularization and whitening transformations for kernel matrices. Orthogonality between feature subspaces is enforced numerically within a predefined tolerance. All stochastic operations, such as landmark selection for low-rank approximations, are controlled via fixed random seeds to ensure run-to-run reproducibility.

\subsection{High-Level Algorithm}
Algorithm~\ref{alg:rkhs-ortho-highlevel} presents a high-level overview of the computational pipeline. The steps are described at a conceptual level; specific implementation choices (e.g., optimization of kernel map computations, block whitening strategies, or pair-selection heuristics) may vary.

\begin{algorithm}[h!]
\caption{STRIDE \texttt{rkhs-ortho} Engine (High-Level Overview)}
\label{alg:rkhs-ortho-highlevel}
\small
\begin{algorithmic}[1]
\Require model $f$, input data $X \in \mathbb{R}^{n \times d}$, weights $W$
\State Compute model outputs $y \leftarrow f(X)$ and center them relative to the global baseline.
\vspace{0.4em}
\Statex \textbf{--- Step 1: Construct main effect feature maps ---}
\For{each feature $j = 1, \dots, d$}
    \State Build a low-rank kernel-based representation of feature $x_j$.
    \State Apply centering and normalization to the feature map.
\EndFor
\State Collect all maps into a main-effect design matrix.
\vspace{0.4em}
\Statex \textbf{--- Step 2: Construct interaction effect feature maps ---}
\State Select a subset of high-potential feature pairs (e.g., via a proxy score).
\For{each selected pair $(i,j)$}
    \State Construct an interaction map (e.g., from the product of feature kernels).
    \State Orthogonalize this map against all lower-order terms (constant and main effects).
\EndFor
\State Augment the design matrix with the interaction maps.
\vspace{0.4em}
\Statex \textbf{--- Step 3: Solve for components ---}
\State Fit coefficients for all components by solving a single regularized least-squares problem.
\State Recover the functional components $f_S(x_S)$ from the fitted coefficients.
\State \Return baseline $f_0$ and the collection of functional components $\{f_S\}$.
\end{algorithmic}
\end{algorithm}

\noindent
This description abstracts away system-specific optimizations (e.g., low-level matrix manipulations, whitening details). Such details are implementation-dependent and not essential for understanding the core methodological contributions of this work.

\newpage

% --- Appendix B ---

\section{Detailed Benchmark Results}
\label{app:full-results}

\noindent
This appendix provides the detailed, seed-averaged numerical results that are summarized in the main text. Table~\ref{tab:full-benchmark-details} reports the mean and standard deviation over 10 random seeds for all metrics across all 10 datasets.

\vspace{0.5em}
\noindent
\textbf{Experimental Environment Details:}
\begin{itemize}
    \item \textbf{Random Seeds}: $\{11, 13, 23, 29, 37, 43, 53, 59, 71, 83\}$.
    \item \textbf{Hardware}: MacBook Air (Apple M1, 8GB RAM), macOS.
    \item \textbf{Software}: Python 3.10.16, NumPy 2.2.5, SciPy 1.15.2, scikit-learn 1.6.1, SHAP 0.46.0.
    \item \textbf{Models}: All runs used scikit-learn RandomForest (Classifier/Regressor) with \texttt{n\_estimators=200}, \\ \texttt{max\_depth=6}, and other default parameters. Identical data splits were used for both STRIDE and TreeSHAP.
\end{itemize}

\begin{table}[h!]
\begin{center}
\caption{Comprehensive Benchmark Results: mean $\pm$ std over 10 seeds.}
\label{tab:full-benchmark-details}
\small
\begin{tabularx}{\textwidth}{@{} l *{6}{>{\centering\arraybackslash}X} @{}}
\toprule
\textbf{Dataset} & \textbf{STRIDE time (s)} & \textbf{TreeSHAP time (s)} & \textbf{Speedup} & \textbf{R$^2$ (STRIDE)} & \textbf{Spearman (main)} & \textbf{Spearman (total)} \\
\midrule
YearPredictionMSD  & 72.613$\pm$4.906 & 168.976$\pm$2.114 & 2.3$\pm$0.1$\times$ & 0.808$\pm$0.011 & 0.549$\pm$0.035 & 0.553$\pm$0.041 \\
German Credit      & 0.131$\pm$0.012  & 0.240$\pm$0.006   & 1.9$\pm$0.2$\times$ & 0.958$\pm$0.004 & 0.862$\pm$0.025 & 0.842$\pm$0.030 \\
\midrule
Breast Cancer      & 0.069$\pm$0.006  & 0.038$\pm$0.003   & 0.6$\pm$0.1$\times$ & 0.999$\pm$0.000 & 0.728$\pm$0.073 & 0.736$\pm$0.057 \\
Credit Default     & 1.609$\pm$0.124  & 11.679$\pm$0.115  & 7.3$\pm$0.5$\times$ & 0.988$\pm$0.001 & 0.930$\pm$0.015 & 0.945$\pm$0.014 \\
Online Shoppers    & 1.039$\pm$0.020  & 3.914$\pm$0.051   & 3.8$\pm$0.1$\times$ & 0.965$\pm$0.007 & 0.920$\pm$0.012 & 0.898$\pm$0.058 \\
Heart Disease      & 0.030$\pm$0.002  & 0.048$\pm$0.001   & 1.6$\pm$0.1$\times$ & 0.999$\pm$0.000 & 0.929$\pm$0.034 & 0.866$\pm$0.060 \\
\midrule
California Housing & 0.550$\pm$0.010  & 5.331$\pm$0.079   & 9.7$\pm$0.2$\times$ & 0.932$\pm$0.006 & 0.952$\pm$0.034 & 0.955$\pm$0.022 \\
Abalone            & 0.418$\pm$0.043  & 1.084$\pm$0.053   & 2.6$\pm$0.2$\times$ & 0.958$\pm$0.003 & 0.881$\pm$0.055 & 0.878$\pm$0.061 \\
Wine Quality (Red) & 0.624$\pm$0.012  & 1.848$\pm$0.034   & 3.0$\pm$0.1$\times$ & 0.903$\pm$0.009 & 0.821$\pm$0.082 & 0.804$\pm$0.061 \\
Diabetes           & 0.041$\pm$0.009  & 0.116$\pm$0.003   & 3.0$\pm$0.7$\times$ & 0.993$\pm$0.002 & 0.779$\pm$0.131 & 0.747$\pm$0.248 \\
\bottomrule
\end{tabularx}
\end{center}
\end{table}

\vspace{0.5em}
\paragraph{Notes.}
(1) TreeSHAP completeness holds by construction; STRIDE fidelity is computed as the coefficient of determination ($R^2$) between the true model output $f(x)$ and STRIDE's reconstructed output $f_0 + \sum_S f_S(x_S)$. (2) Runtime is wall-clock time on the stated hardware/software, with identical preprocessing (including one-hot encoding where applicable) and data splits used for both methods. (3) No GPU acceleration was used.

\end{document}